\DeclareMathOperator*{\argmax}{argmax} 
\DeclareMathOperator*{\argmin}{argmin}
\def\BibTeX{{\rm B\kern-.05em{\sc i\kern-.025em b}\kern-.08em
    T\kern-.1667em\lower.7ex\hbox{E}\kern-.125emX}}
\newcolumntype{?}{!{\vrule width 1.5pt}}
\newtheorem{theorem}{Theorem}
\newtheorem{definition}{Definition}
\newtheorem{corollary}{Corollary}
\newtheorem{proposition}{Proposition}
\title{OptIForest: Optimal Isolation Forest for Anomaly Detection}
\author{
Haolong Xiang$^1$
\and
Xuyun Zhang$^{1*}$\and
Hongsheng Hu$^{2}$\and
Lianyong Qi$^3$\and
Wanchun Dou$^4$\and
Mark Dras$^1$\and
Amin Beheshti$^1$\And
Xiaolong Xu$^{5*}$
\affiliations
$^1$Macquarie University\\
$^2$CSIRO’s Data61\\
$^3$Qufu Normal University\\
$^4$Nanjing University\\
$^5$Nanjing University of Information Science and Technology 
\emails
haolong.xiang@hdr.mq.edu.au,
\{xuyun.zhang, mark.dras, amin.beheshti\}@mq.edu.au,
Hongsheng.Hu@data61.csiro.au,
lianyongqi@gmail.com,
douwc@nju.edu.cn,
xlxu@nuist.edu.cn
}
\begin{document}

\maketitle

\begin{abstract}
    Anomaly detection plays an increasingly important role in various fields for critical tasks such as intrusion detection in cybersecurity, financial risk detection, and human health monitoring. A variety of anomaly detection methods have been proposed, and a category based on the isolation forest mechanism stands out due to its simplicity, effectiveness, and efficiency, e.g., iForest is often employed as a state-of-the-art detector for real deployment. While the majority of isolation forests use the binary structure, a framework LSHiForest has demonstrated that the multi-fork isolation tree structure can lead to better detection performance. However, there is no theoretical work answering the fundamentally and practically important question on the optimal tree structure for an isolation forest with respect to the branching factor. In this paper, we establish a theory on isolation efficiency to answer the question and determine the optimal branching factor for an isolation tree. Based on the theoretical underpinning, we design a practical optimal isolation forest OptIForest incorporating clustering based learning to hash which enables more information to be learned from data for better isolation quality. The rationale of our approach relies on a better bias-variance trade-off achieved by bias reduction in OptIForest. Extensive experiments on a series of benchmarking datasets for comparative and ablation studies demonstrate that our approach can efficiently and robustly achieve better detection performance in general than the state-of-the-arts including the deep learning based methods.
    
\end{abstract}

\section{Introduction}

Detection of anomalies (also known as outliers) is an important machine learning task to capture abnormal patterns or sparse observations in data that collide with the majority showing the expected behaviours ~\cite{pang2021deep}, and has been deployed in a broad range of fields for critical applications such as intrusion detection in cybersecurity, financial risk detection, and human or device health monitoring \cite{ahmed2016survey,chen2022antibenford,fernando2021deep}. While anomalies often take a very small portion of the data or appear infrequently, failing to catch them in a timely manner for further actions can result in severe consequences such as cascading failures in manufacture and deaths in healthcare. Therefore, a variety of unsupervised anomaly detection methods \cite{ruff2021unifying}, from shallow to deep, have been proposed for different types of anomalies and data types. Note that since collecting labels for data is usually difficult and expensive, especially for anomalies, we herein focus on unsupervised anomaly detection which is more commonly-used in practice. Recently, several deep learning based detection methods, e.g., RDP \cite{wang2021unsupervised} and REPEN \cite{pang2018learning}, have shown the advantages of having feature representation learning for anomaly detection ~\cite{pang2021deep}. But this does not mean that the traditional (shallow) detection methods will be obsolete because deep neural networks have their own intrinsic limitations such as high computational cost, poor explainability, and difficulty in hyperparameter tuning \cite{li2022ecod}, or a shallow model can have comparable performance but cost much less, e.g., a very recent work ECOD \cite{li2022ecod} just use the statistic analysis of the tail event of a distribution to achieve the good performance over various benchmark datasets. Instead, traditional models can be preferred in specific scenarios, e.g., edge computing where computational resources are limited and medical research where high explainability is in demand. Moreover, these methods can work together with the feature representations learned from deep learning for better performance, e.g., a recent work \cite{xu2022deep} has shown such an example that iForest is successfully used together with deep learning.

Benefiting from ensemble learning~\cite{aggarwal2015theoretical}, a category of detection methods based on the isolation forest mechanism \cite{hariri2019extended} stands out of the shallow models due to its excellent simplicity, effectiveness, and efficiency, therefore being really promising for big data anomaly detection. The basic idea is to randomly and recursively partition relatively small samples drawn from a dataset until all data instances are isolated, which produces a forest of isolation trees. Anomaly scores can be derived from an isolation forest based on an observation that anomalies often have shorter path lengths due to the ease of isolating them from others. As the first instance, iForest \cite{liu2008isolation} has been widely recognised in academia and deployed in real applications, e.g., it has been included in \textit{scikit-learn}, a commonly-used machine learning library in Python. While most of isolation forests following iForest use the binary tree structure for data isolation, a framework LSHiForest \cite{zhang2017lshiforest} producing multi-fork isolation trees with the use of similarity hash functions has demonstrated better detection performance. An interesting question arises naturally: What is the optimal branching factor for an isolation tree? However, there is little theoretical work answering this fundamentally important question, and practically this wide gap can hamper the further development of isolation forest based anomaly detection methods.

In this paper, we investigate this interesting problem and establish a theory on the structure optimality of an isolation tree with respect to the branching factor by introducing the notion of isolation efficiency. A constrained optimisation problem is formulated to solve the optimality problem, and an interesting finding is that the optimal branching factor is $e$ (Euler's number), rather than $2$ which is commonly-used in existing methods. Based on the theoretical foundation, a practical optimal isolation forest named OptIForest is proposed for efficient and robust anomaly detection. Specifically, we adapt clustering based learning to hash in OptIForest to let the isolation process optimised with more information learned from data. With two key observations on ensemble learning and anomaly distribution in an isolation tree, we design controllable initialisation for agglomerative hierarchical clustering, enabling OptIForest to achieve a better bias-variance trade-off via bias reduction from learning and improve the computation efficiency. Extensive experiments are performed on a suite of benchmarking datasets in our ablation and comparative studies. The results validate our proposed theory on optimal isolation forest, and also show that with respect to detection performance, OptIForest can generally outperform the state-of-the-arts including the deep anomaly detection methods while maintaining a high computation efficiency. The source code is available at \url{https://github.com/xiagll/OptIForest}.

The main contributions of our work are threefold, summarised as follows: (1) We are the first to formally investigate the optimality problem of isolation tree structure with respect to the branching factor and establish a theory on the optimal isolation forest, offering a theoretical understanding for the effectiveness of the isolation forest mechanism. (2) We innovatively propose a practical optimal isolation forest OptIForest that can enhance both detection performance and computational efficiency by designing a tailored clustering based learning to hash for a good bias-variance trade-off. (3) Results from extensive experiments support our theory well and validate the effectiveness and efficiency of our approach, as well as the advantages over the state-of-the-arts.

\section{Related Work}
Various methods for unsupervised anomaly detection have emerged, including distance-based, density-based, statistical, ensemble-based, and deep learning-based methods~\cite{chandola2009anomaly,pang2021deep}. This section delves into the crucial methods related to our study.

\noindent \textbf{Deep Learning-based Methods.}~~Recently, deep neural networks are widely explored for anomaly detection, particularly on complex data types~\cite{zong2018deep,zavrtanik2021reconstruction}. Techniques like generative adversarial networks (GANs)~\cite{liu2019generative}, AutoEncoders~\cite{chen2017outlier}, and reinforcement learning have been leveraged to enhance the detection performance~\cite{pang2021toward}. For example,  a random distance-based anomaly detection method called REPEN is proposed in \cite{pang2018learning}, where learning low-dimensional representation in random subsample is optimised. While these deep methods can have high accuracy with learning feature representation \cite{zha2020meta,zhang2021elite}, they often suffer from issues like expensive computations, complex hyperparameter tuning, etc.

\noindent \textbf{Ensemble-based Methods.}~~To achieve robust detection, classical anomaly detection methods are integrated with ensemble learning, e.g., ensemble LOF~\cite{zimek2013subsampling}, isolation using Nearest Neighbour Ensemble (iNNE)~\cite{bandaragoda2014efficient}, and average $k$-NN distance ensemble~\cite{aggarwal2015theoretical}. These methods suffer from heavy computational cost when handling big data. A simpler but more effective method is iForest \cite{liu2012isolation} which leverages the principle that anomalies are more likely to be isolated from others. This pioneering work shows the strong ability of the isolation forest mechanism and has been widely adopted in real applications. A sequence of work on isolation forests have been proposed to mitigate the shortcomings in iForest, e.g., SCiForest \cite{liu2010detecting} addresses the failure of detecting axis-parallel anomalies and local anomalies by using a optimsation strategy. Another interesting work is LSHiForest which produces multi-fork isolation trees with the use of LSH (locality-sensitive hashing) functions that can hash similar data into the same hash value. The work formally understands the distance metric underlying iForest and enables the isolation forest mechanism widely applicable to any data types where an LSH family can be defined. But why a multi-fork isolation forest performs better has not been tackled in LSHiForest. Deep isolation forest \cite{xu2022deep} has been recently proposed to incorporate isolation forest with random feature representation, extending iForest to complex data types. However, these works still fail to learn information properly from data to reduce the bias of a base detector for a better bias-variance trade-off.
 


\section{Preliminaries and Problem Statement}

\noindent \textbf{LSHiForest Framework \cite{zhang2017lshiforest}.} Our approach is built on top of the LSHiForest Framework which is an effective and generic anomaly detection framework with the forest isolation mechanism. While the algorithmic procedural and the way of deriving anomaly scores in LSHiForest are quite similar to that in iForest, a key difference is that LSHiForest makes use of a hash function to determine the branching when isolating data in the recursive tree construction process, i.e., data instances with the same hash value go into the same branch. As a result, the isolation trees can be multi-fork, which is significantly different from the binary case in iForest. Since the hash functions are drawn from an LSH (Locality-Sensitive Hashing) family \cite{bawa2005lsh}, the data instances falling into the same branch are similar to each other with provably high probability and the isolation process is more natural than binary splitting. Moreover, the LSHiForest framework has high applicability to work with any distance metric with an LSH family. In fact, iForest and its variant have been proved to be two specific instances of the framework with less commonly-used distance metrics. Besides, other instances of the framework with specific distance metrics like Angular distance, Manhattan ($\ell_1$) distance, and Euclidean ($\ell_2$) distance have been implemented as well, and the instance with Euclidean distance (denoted as L2SH) shows a very efficient, robust, and accurate detection performance. Thus, we are motivated to use LSHiForest and its L2SH instance as the foundation of our approach given their prominent features and excellent performance.

\noindent \textbf{Problem Statement.}~~Although multi-fork isolation trees can be elegantly constructed in LSHiForest and better detection performance can be achieved, the theoretical understanding of this phenomenon is still missing. Accordingly, an interesting question arises naturally: What is the optimal branching factor for an isolation tree? Since the branching factor influences the tree structure which can be regarded as a parameter of the detection model, answering this question is fundamentally important to understand how the branching factor affects the performance of isolation forest based anomaly detection methods, but intrinsically a challenging task. Fig.~\ref{fig_isolation_example} shows an example of isolating nine data instances with three different tree structures, including $9$-fork tree, binary tree, and ternary tree. It is hard to intuitively tell which structure is the best for isolation, so how to address this problem is non-trivial.


\begin{figure}
  \centering
  \includegraphics[width=0.4\textwidth]{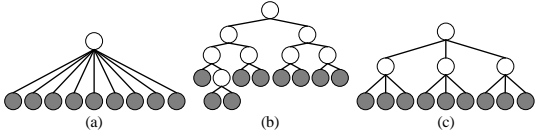}
  \captionsetup{font={small}}
  \caption{Isolating $9$ data instances with different tree structures.}
  \label{fig_isolation_example}
\end{figure}

Besides the tree structure, another challenge imposed on existing isolation forest based anomaly detection methods is how much information they should learn from the data to facilitate data partition at each internal node. For instance, iForest learns the minimum and maximum of the selected feature to determine the random splitting value, and its variant SCiForest\cite{liu2010detecting} learns more information to determine the splitting hyperplane and gains better performance. But it is worth noting that more learning does not means better detection performance while the bias of a single base detector can be reduced, according to the bias-variance trade-off theory in ensemble learning \cite{aggarwal2015theoretical}. For example, the recent work \cite{xu2022deep} actually reports that full learning for the neural networks has poorer detection performance than the random weight initiation. Because LSH hash functions used in LSHiForest are data-independent, no learning is incorporated in LSHiForest for data isolation. Thus, it is conjectured that the detection performance could be further improved if an isolation tree is constructed with the learning to hash technique \cite{wang2017survey} which produces hash values based on learning information from the data. But learning to hash usually incurs higher computational cost than LSH. Therefore, non-trivial effort is required for the design of an anomaly detector based on isolation forest with learning to hash, to target both high computational efficiency and a good bias-variance trade-off.

\section{Methodology}

\subsection{Optimal Isolation Forest}
\label{optimaltree}
To answer the research question stated above about the optimal branching factor of an isolation tree, this section formulates the problem to derive the solution and discusses its practical implementation. Let $T$ represents an isolation tree, with the branching factor $v$, the tree depth $d$, and the number of leaf nodes $\psi$. For the purpose of theoretical analysis, we can reasonably allow $v$, $d$, and $\psi$ to take real values rather than just integers. Actually, $v$ and $d$ can be regarded as average branching factor of the internal nodes and the average height of the leaf nodes, respectively. To study how the branching factor influences the isolation performance of an isolation tree, we assume that $T$ is a perfect tree where all internal nodes have $v$ children and all leaves have the same depth $d$. Note that the isolation tree structure may be totally different in practice, but the assumption is the average case which is reasonable in the ensemble learning setting. Given $T$ is perfect, we can have the following relationship:
\begin{equation}
    \psi=v^d.
\end{equation}

\begin{definition}[Isolation Capacity]
\label{definition-icapacity}
The maximum number of data instances an isolation tree can isolate is defined as the isolation capacity of the tree, denoted as $\psi$, which is also the number of leaf nodes. 
\end{definition}

Given an isolation capacity, it can be interestingly observed that the width (controlled by $v$) and the depth (controlled by $d$) of the tree compete with each other, i.e., if the branching factor is smaller, the tree has to be deeper, or vice versa. So, we can have the following definition to capture the overall effect of branching factor and tree depth.

\begin{definition}[Isolation Area]
\label{definition-icost}
The isolation area of an isolation tree, denoted as $\phi$, is defined as the product of the branching factor $v$ which controls the width of a tree and the depth $d$, i.e.,
\begin{equation}
    \phi=v{\cdot}d.
\end{equation}
\end{definition}

Note that isolation trees of the same isolation area can achieve different isolation capacities. For example, given an isolation area $\phi=6$, a perfect binary tree (i.e., $v=2$ and $d=3$) has the isolation capacity $\psi=2^3=8$, but a perfect ternary tree (i.e., $v=3$ and $d=2$) has the isolation capacity $\psi=3^2=9$. The latter case seems more efficient, and we can further define the concept of isolation efficiency as follows.

\begin{definition}[Isolation Efficiency]
The isolation efficiency of an isolation tree, denoted as $\eta$, is defined as the quotient of the isolation capacity divided by the isolation area, i.e., 
\begin{equation}
    \eta=\frac{\psi}{\phi}.
\end{equation}
\end{definition}

Isolation efficiency fundamentally affects the anomaly detection performance of isolation forests because it is associated with the hardness of distinguishing data instances isolated by a tree. Specifically, higher isolation efficiency leads to stronger distinguishability. Continuing the example above, the isolation efficiency of the binary tree is $\eta = 8/6 \simeq 1.33$ and that of the ternary tree is $\eta = 9/6=1.5$. Fig.~\ref{fig_isolation_example}(c) shows that the ternary tree can isolate all the data, while Fig.~\ref{fig_isolation_example}(b) shows that a binary tree with only 3 layers fails to do so.


As the branching factor indeed affects the detection performance in terms of the analysis presented above, it is of both theoretical and practical importance to study the problem of the optimal branching factor. With the concepts introduced above, we can formulate the problem of identifying the optimal branching factor $v^*$ as a constrained optimisation problem as follows.

\begin{equation}
\begin{aligned}
v^*= & ~~ \argmax_{v} ~~ \eta(v, d)=\frac{v^d}{vd},\\
 & \textrm{s.t.} ~~ vd=\Phi,\\
\end{aligned}
\end{equation}

{\noindent}where $\eta(v, d)$ is the function of isolation efficiency with respect to branching factor $v$ and tree depth $d$, and $\Phi$ is a constant number representing the fixed isolation area. By solving the optimisation problem, we can have the following theorem.

\begin{theorem}
\label{theorem} 
An isolation tree $T$ has the highest isolation efficiency when its branching factor $v=e$, where $e$ is the Euler's number with numerical values around $2.718$.
\end{theorem}

The theorem can be proved with solving the optimisation problem in (4). Please see Appendix~\ref{proof1} for mode details, where how the isolation efficiency changes with respect to branching factor with a fixed isolation area is also illustrated. This interesting result reveals that the commonly-used binary tree is not the optimal for the forest isolation mechanism, and explains the better detection accuracy and robustness gained by the multi-fork LSHiForest instances like L2SH.

\begin{definition}[Optimal Isolation Tree]
\label{definition-OptITree}
An isolation tree with branching factor $v=e$ is defined as an optimal isolation tree.
\end{definition}
\begin{definition}[Optimal Isolation Forest]
\label{definition-OptIForest}
A forest consisting of a set of optimal isolation trees is defined as an optimal isolation forest.
\end{definition}

Although the optimal isolation tree is theoretically promising, unfortunately there is no real $e$-fork branching can be constructed directly in reality. As a practically viable way, we can build an isolation tree with the average branching factor equal to $e$. To achieve this, we need to generate the branching factors during the tree construction. Let a random variable $V$ denote the branching factor for an isolation tree $T$, with the sample space $\{v~|~v\in\mathbb{Z}~ \& ~v~{\geq}~2\}$. Let $\mathcal{D}$ be a distribution with the probability of taking value $v$ being: $\textrm{Pr}(V=v)=p_v$. We can generate the branching factors from the distribution $\mathcal{D}$ if it satisfies the following condition:
\begin{equation}
\label{expectation:condition}
    \mathbb{E}(V)=\sum_{v=2}^{+\infty}v{\cdot}p_v=e.
\end{equation}

As the branching factor $2$ is the only one less than $e$, there should be a lower bound for $p_2$ and upper bounds for $p_v$, $v>2$, to satisfy the condition mentioned above.

\begin{theorem}
\label{theorem:uppperbound}
To satisfy the condition in Eq.(\ref{expectation:condition}), the probability of having a branching factor $V{\geq} v$, $v>2$, should have the following upper bound:
\begin{equation}
    \textrm{Pr}(V{\geq}v) ~{\leq}~ \frac{(e-2)}{(v-2)}.
\end{equation}
\end{theorem}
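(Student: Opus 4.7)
The plan is to recognize this as a Markov-type tail bound and reduce it to the classical Markov inequality by a simple shift. Since the sample space of $V$ is $\{v \in \mathbb{Z} : v \geq 2\}$, the random variable $V$ is bounded below by $2$ but not bounded above, so it is not immediately non-negative in the sense needed to apply Markov to $V$ directly and obtain a tight bound. The trick will be to work with the shifted random variable $W = V - 2$, which is supported on the non-negative integers.

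First I would compute the expectation of $W$ using linearity and the given constraint: $\mathbb{E}(W) = \mathbb{E}(V) - 2 = e - 2$, which is positive since $e > 2$. Next I would apply Markov's inequality to $W$: for any $a > 0$,
\begin{equation*}
\textrm{Pr}(W \geq a) \leq \frac{\mathbb{E}(W)}{a} = \frac{e-2}{a}.
\end{equation*}
Then I would specialize to $a = v - 2$, which is strictly positive by the hypothesis $v > 2$. The event $\{W \geq v - 2\}$ is exactly the event $\{V \geq v\}$, so the inequality becomes
\begin{equation*}
\textrm{Pr}(V \geq v) \leq \frac{e-2}{v-2},
\end{equation*}
which is the claim.

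I do not anticipate any serious obstacle here; the main conceptual step is simply noticing that the natural shift is by $2$ rather than by $0$, so that the shifted variable is non-negative and its mean is exactly the numerator $e - 2$ appearing in the target bound. For completeness I would briefly note that Markov's inequality applies to any non-negative random variable on a discrete support, so the countably infinite support $\{v \in \mathbb{Z} : v \geq 2\}$ poses no issue provided $\mathbb{E}(V) = e$ is finite, which is given. If the author wishes, one can also give a direct derivation without invoking Markov by name: from $e = \sum_{k=2}^{\infty} k \, p_k \geq \sum_{k=2}^{v-1} 2 \, p_k + \sum_{k=v}^{\infty} k \, p_k$ and $\sum_k p_k = 1$, one isolates $\sum_{k \geq v} p_k = \textrm{Pr}(V \geq v)$ and rearranges, but the Markov route via the shift is the cleanest presentation.
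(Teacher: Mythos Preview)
Your proof is correct. The underlying inequality is the same one the paper uses, namely $\sum_{i\geq v}(i-2)p_i \geq (v-2)\sum_{i\geq v}p_i$ combined with $\sum_{i\geq 2}(i-2)p_i = e-2$, but the packaging differs: the paper argues by first passing to an extremal distribution (setting $p_i=0$ for $2<i<v$) and then manipulating the two constraint equations by hand, whereas you recognise the statement as Markov's inequality applied to the shifted non-negative variable $W=V-2$. Your route is slightly cleaner, since it applies uniformly to every admissible distribution without the extra extremal-case step; the paper's route, on the other hand, makes the tightness of the bound (attained when all mass sits at $2$ and $v$) more visible. The ``direct derivation'' you sketch at the end is essentially the paper's argument.
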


The theorem can be proved by letting $p_i=0$ for $2<i<v$ and leveraging the inequality $\sum_{i=v}^{+\infty}ip_i ~{\geq}~ v\sum_{i=v}^{+\infty}p_i$. Please see Appendix~\ref{proof2} for more details. The upper bound result can give us an intuition on how the probability decreases when the branching factor increases, e.g., the probability is $e-2 \simeq 0.718$ for $V{\geq}3$, $\frac{(e-2)}{2} \simeq 0.359$ for $V{\geq}4$, and $\frac{(e-2)}{3} \simeq 0.239$ for $V{\geq}5$.

\begin{corollary}
To satisfy the condition in Eq.(\ref{expectation:condition}), the probability of sampling the branching factor $V=2$ should follow $p_2 \geq 3-e \simeq 0.282$.
\end{corollary}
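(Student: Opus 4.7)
The plan is to derive the lower bound on $p_2$ directly from the two constraints governing the distribution $\mathcal{D}$: the normalisation condition $\sum_{v=2}^{+\infty} p_v = 1$, and the expectation condition $\sum_{v=2}^{+\infty} v p_v = e$ from Eq.~(\ref{expectation:condition}). My strategy is to eliminate the tail probabilities in favour of a single inequality that isolates $p_2$.

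First, I would subtract twice the normalisation identity from the expectation identity to obtain
\begin{equation*}
\sum_{v=2}^{+\infty}(v-2)\,p_v \;=\; e-2.
\end{equation*}
The $v=2$ term vanishes, so this reduces to $\sum_{v=3}^{+\infty}(v-2)\,p_v = e-2$. Next, since $v-2 \geq 1$ whenever $v\geq 3$, each coefficient in the sum is at least $1$, which gives the straightforward bound
\begin{equation*}
\sum_{v=3}^{+\infty} p_v \;\leq\; \sum_{v=3}^{+\infty}(v-2)\,p_v \;=\; e-2.
\end{equation*}
Finally, applying the normalisation condition once more yields $p_2 = 1 - \sum_{v=3}^{+\infty} p_v \geq 1-(e-2) = 3-e$, which is the claim.

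As a sanity check, I would note that this result is just the $v=3$ instance of Theorem~\ref{theorem:uppperbound}: plugging $v=3$ into that theorem's bound gives $\Pr(V\geq 3)\leq e-2$, and the normalisation then forces $p_2 \geq 3-e$. So the corollary could alternatively be stated as an immediate consequence of the previously-proved upper bound theorem, requiring essentially no extra work. There is no real obstacle here since both the subtraction trick and the trivial inequality $(v-2)\geq 1$ on $v\geq 3$ are elementary; the only thing to be careful about is making sure the sums are well-defined (they are, since we assume the expectation exists and equals $e$, which guarantees absolute convergence of all the series involved).
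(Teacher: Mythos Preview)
Your proposal is correct, and in fact your ``sanity check'' paragraph is exactly the paper's proof: the paper simply invokes Theorem~\ref{theorem:uppperbound} at $v=3$ to get $\Pr(V\geq 3)\leq e-2$ and then writes $p_2 = 1-\Pr(V\geq 3)\geq 3-e$. Your main argument (subtracting twice the normalisation from the expectation and using $v-2\geq 1$ for $v\geq 3$) is just the proof of Theorem~\ref{theorem:uppperbound} specialised to $v=3$, so the two approaches coincide.
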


This corollary can be simply derived from Theorem \ref{theorem:uppperbound} when $V{\geq}3$, i.e., $p_2 = 1-\textrm{Pr}(V{\geq}3) ~{\leq}~ 1-(e-2)=3-e$.

There are many possible concrete distributions for $\mathcal{D}$ to be specified, either finite or infinite. For example, by just using binary and ternary branches, we can have a simple finite distribution as follows: $p_2=3-e$, $p_3=e-2$, and $p_i=0$, $i~{\geq}~4$. For the infinite case, the probability is often a function of the branching factor. The work in \cite{russell1991estimating} derives a distribution $p_v=\frac{(v-1)}{(v!)}$, $v~{\geq}~2$, from a stochastic technique. Besides, we can have another infinite distribution $p_v=\frac{(e-1)^2}{2e-1}e^{2-v}$, $v~{\geq}~2$, to satisfy the condition in Eq.(\ref{expectation:condition}), as proved in Appendix~\ref{proof3}. Understanding the distributions of branching factors and the probability bounds can facilitate the decision-makings when one tries to design a more practical optimal isolation forest, as shown in the following section.

\subsection{OptIForest: Practical Detector Design with Clustering based Learning to Hash}
In this section, we investigate how to implement a practical optimal isolation tree. Since the data-independent LSH hash functions cannot exactly produce a specified number of hash values, we have to leverage learning to hash to achieve this for a specified branching factor. Moreover, learning to hash can capture more information from the data to benefit anomaly detection potentially. There are many types of learning to hash \cite{wang2017survey}. Given the higher accuracy and less quantification loss, the non-parametric hash function $h(\cdot)$ based on nearest vector assignment is adopted herein, i.e.,
\begin{equation}
    h(\boldsymbol{x})=\argmin_{k{\in}\{1, \cdots, v\}} ||\boldsymbol{x}-\boldsymbol{c}_k||,
\end{equation}
{\noindent}where $\{\boldsymbol{c}_1, \cdots, \boldsymbol{c}_v\}$ is a set of centres of data partitions. Clustering algorithms are employed to compute the centres. 

As our goal is to arrange the clusters into a natural hierarchy to form an isolation tree, agglomerative hierarchical clustering is adopted in our approach. The clustering method treats each data instance as a cluster initially and merges similar clusters sequentially until a single cluster is left, forming a hierarchical tree in a bottom-up fashion. While good clustering quality with less data distortion can be obtained, the high computational complexities are often regarded as a downside. With a cubic time complexity, clustering on a small dataset of a fixed size (e.g., 256 in iForest, and 1024 in LSHiForest) still seems too time-consuming. Straightforward adoption of the clustering technique is neither effective nor efficient. 

Fortunately, we find that the clustering technique can be optimised based on two key observations in an isolation forest and design a more efficient clustering method. One observation is that anomalies are often located at upper levels of an isolation tree while normal data at lower levels. The quality of clustering at upper levels is more sensitive to anomaly detection than that at lower levels. The other observation is that no learning or full learning in existing methods can lead to poorer detection performance. Therefore, we can let the upper levels of an isolation tree learn more from data to have better clustering quality, while letting the lower levels learn less, aiming to improve detection performance and save computational cost simultaneously. 

The basic idea of our approach is to use an isolation tree efficiently produced in LSHiForest to initialise the clusters for agglomerative hierarchical clustering so that the clustering process begins with bigger initial clusters rather than the ones with a single data instance. This can maintain the high clustering quality for upper levels of the hierarchy and save much computational cost because the majority of computation occurs at lower levels. The specific steps for constructing an optimal tree $T_{Opt}$, are outlined in Algorithm~\ref{alg1}, and details are discussed subsequently.
\begin{algorithm}
	\caption{Constructing an Optimal Isolation Tree}
	\label{alg1}
	\begin{algorithmic}[1]
		\renewcommand{\algorithmicrequire}{\textbf{Input:}}
		\renewcommand{\algorithmicensure}{\textbf{Output:}}
		\Require A dataset (sample) $D$ of size ${\psi}$, and cut threshold $\epsilon$.
		\Ensure  $T_{Opt}$-an optimal isolation tree.
        \State  Train LSHiForest to get $T_{LSH}$;   {\color{gray}\Comment{Pre-training}}
        \State Traverse $T_{LSH}$ to get $\Gamma(\epsilon)$;
        \State $\mathbb{C}_0 \leftarrow \mathbb{C}_{\Gamma(\epsilon)}$; $\Gamma_0 \leftarrow \Gamma(\epsilon)$
        {\color{gray}\Comment{Initialisation}}
        \While{$|\mathbb{C}_i|>1$}
        \State Generate a branching factor $v$ from $\mathcal{D}$;
        \If{$|\mathbb{C}_i|~{\leq}~v$}
        \State \Return $T_{Opt}$ with $(N_{\text{root}}, \mathcal{C}^{\prime}) \leftarrow \text{merge}(\mathbb{C}_i)$;
        \EndIf
        \State $J_{\text{cur}}\leftarrow +\infty, \mathbb{C}_{\text{cur}}\leftarrow NULL$
        \ForAll {$\mathbb{C}_v=\{\mathcal{C}_{i1}, \cdots, \mathcal{C}_{iv}\}\subset \mathbb{C}_i$}
        \If{$\text{dist}(\mathbb{C}_v) < J_{\text{cur}}$}
        \State $J_{\text{cur}}\leftarrow \text{dist}(\mathbb{C}_v)$, $\mathbb{C}_{\text{cur}}\leftarrow \mathbb{C}_v$
        \EndIf
        \EndFor 
        \State $(N^{\prime}, \mathcal{C}^{\prime}) \leftarrow \text{merge}(\mathbb{C}_{\text{cur}})$, ;
        {\color{gray}\Comment{Optimal identified}}
        \State $\mathbb{C}_{i+1} \leftarrow \mathbb{C}_{i} \setminus \mathbb{C}_{\text{cur}}$, $\mathbb{C}_{i+1} \leftarrow \mathbb{C}_{i+1} \cup \{\mathcal{C}^{\prime}\}$;
        \State $\Gamma_{i+1}\leftarrow \Gamma_i\setminus \{N_{i1}, \cdots, N_{iv}\}$, $\Gamma_{i+1}\leftarrow \Gamma_{i+1} \cup \{N^{\prime}\}$;
        \EndWhile
	\end{algorithmic} 
\end{algorithm}

Let $T_{LSH}$ denote an isolation tree in LSHiForest. It can be horizontally partitioned into two disjoint parts by a cut which can be represented as a set of node. Let $\Gamma$ denote a cut, and $\Gamma = \{N_1, \cdots, N_{n_0}\}$, where $N_i$, $1~{\leq}~i~{\leq}~n_0$ represents a node, and $n_0$ is the number of the nodes in $\Gamma$. The node $N_i$ is either a subtree of $T_{LSH}$ or a leaf (a trivial subtree), and contains a dataset which can be regarded as a cluster denoted as $\mathcal{C}_i$. Thus, we can obtain a set of clusters $\mathbb{C}_{\Gamma}=\{\mathcal{C}_1, \cdots, \mathcal{C}_{n_0}\}$ from a cut $\Gamma$. Then, we can define the concept of $\epsilon$-cut below.
\begin{definition}[$\epsilon$-Cut]
Given a cut $\Gamma$ and its associated clusters $\mathbb{C}_{\Gamma}=\{\mathcal{C}_1, \cdots, \mathcal{C}_{n_0}\}$, for any node $N_i~{\in}~\Gamma$, if its cluster size satisfies $|\mathcal{C}_i|~{\leq}~\epsilon<|\mathcal{C}_p|$, where $\epsilon$ is a threshold and $\mathcal{C}_p$ is the cluster associated with its parent node, $\Gamma$ is called $\epsilon$-cut and denoted as $\Gamma(\epsilon)$.
\end{definition}

We can construct $\Gamma(\epsilon)$ by simply traversing the tree, and use the associated clusters $\mathbb{C}_{\Gamma(\epsilon)}$ as the initial ones for agglomerative clustering. By tuning $\epsilon~{\in}~[1,\psi]$, where $\psi$ is the sample size in isolation forests, we can control the number and size of the clusters to further, and further manipulate the degree of learning. A higher $\epsilon$ leads to fewer initial clusters and implies less learning. The initialisation degenerates into the conventional case if $\epsilon=1$, while if $\epsilon=\psi$, the resultant isolation forest is still $T_{LSH}$ without learning anything.

Unlike traditional agglomerative hierarchical clustering where a binary tree is generated, we need to achieve multi-fork branches for an optimal isolation tree. The distortion measure is employed to capture the learning loss caused by merging multiple clusters. Let $\boldsymbol{\mu}_{\mathbb{C}}$ denote the merged centre (mean) of a set of clusters $\mathbb{C}=\{\mathcal{C}_1, \cdots, \mathcal{C}_{v}\}$, calculated by

\begin{equation}
    \boldsymbol{\mu}_{\mathbb{C}}= \frac{\sum_{i=1}^v\boldsymbol{\mu}_{\mathcal{C}_i}{\cdot}n_i}{\sum_{i=1}^vn_i},
\end{equation}

{\noindent}where $\boldsymbol{\mu}_{\mathcal{C}}$ is the centre of cluster $\mathcal{C}=\{\boldsymbol{x}_1, \cdots, \boldsymbol{x}_n\}$, calculated by $\boldsymbol{\mu}_{\mathcal{C}} = \frac{1}{n}\sum_{i=1}^{n}\boldsymbol{x}_i$. Then, the distortion of merging the clusters in $\mathbb{C}$ can be calculated as 
\begin{equation}
\text{dist}(\mathbb{C}) =\sum_{i=1}^v || \boldsymbol{\mu}_{\mathcal{C}_i}-\boldsymbol{\mu}_{\mathbb{C}}|| \cdot n_i.
\end{equation}

After generating the branching factor $v$ (Line 5 in Algorithm \ref{alg1}), Line 8-14 shows the details of performing cluster merging. We need to try all the combinations of $v$ clusters and evaluate their distortions. The one with minimum distortion is selected for merging, and a new node is created for the resultant cluster. This process repeats until the number of candidate clusters is not greater than the given branching factor. Finally, all the remaining clusters are merged as the root node of the resultant isolation tree.

It is worth noting the produced tree is not an exact optimal isolation tree if $\epsilon$ is not $1$, because this practical algorithm takes the trade-off between accuracy and efficiency into account. The most time-consuming part is the evaluation of all the combinations of $v$ clusters, whose time complexity grows exponentially with respect to $v$. Thus, to be efficient, our approach only leverages binary and ternary branches for the tree construction, i.e., we adopt a finite distribution, which is technically reasonable because the probability drops considerably when the branching factor increases. But note that other distributions can be used if other clustering techniques like K-means are used.

\begin{table*}[htbp]
    \scriptsize
    
    \setlength\tabcolsep{2.0pt}
    \caption{AUC-ROC and AUC-PR performance (mean $\pm$ standard deviation) of all methods. Our \textbf{OptIForest} method outperforms others.}
	\centering
	\label{tab:auc}
 \resizebox{1.0\textwidth}{!}{%
	\begin{tabular}{l|llllll>{\columncolor[gray]{0.9}}l|llllll>{\columncolor[gray]{0.9}}l} 
		\toprule
         & \multicolumn{7}{c|}{AUC-ROC (\%)} & \multicolumn{7}{c}{AUC-PR (\%)} \\
        \midrule
		Datasets & iForest & LSHiForest & ECOD & REPEN & RDP & DiForest & \textbf{OptIForest} & iForest & LSHiForest & ECOD & REPEN & RDP & DiForest & \textbf{OptIForest} \\ 
		\midrule 
		AD & 69.3 $\pm$ 1.9 & \textbf{77.9 $\pm$ 0.4} & 69.8 $\pm$ 0 & 70.0 $\pm$ 2.2 & \textbf{88.7 $\pm$ 0.3} & 76.8 $\pm$ 0.7 & 77.4 $\pm$ 0.7 & 40.1 $\pm$ 4.9 & 46.2 $\pm$ 0.6 & 48.3 $\pm$ 0 & 37.7 $\pm$ 4.0 & \textbf{72.6 $\pm$ 0.7} & \textbf{51.8 $\pm$ 2.7} & 43.8 $\pm$ 3.1 \\
		campaign & 70.9 $\pm$ 1.0 & 67.7 $\pm$ 0.6 & \textbf{77.5 $\pm$ 0} & 61.1 $\pm$ 4.4 & \textbf{76.3 $\pm$ 0.8} & 69.3 $\pm$ 0.9 & 74.7 $\pm$ 0.4 & 28.5 $\pm$ 1.3 & 24.7 $\pm$ 1.0 & \textbf{35.6 $\pm$ 0} & 17.8 $\pm$ 3.9 & \textbf{37.2 $\pm$ 0.9} & 27.5 $\pm$ 1.3 & 32.0 $\pm$ 0.5 \\
		Arrhythmia & \textbf{79.7 $\pm$ 1.0} & 77.5 $\pm$ 0.5 & \textbf{82.3 $\pm$ 0} & 73.7 $\pm$ 3.6 & 75.5 $\pm$ 0.5 & 76.3 $\pm$ 1.1 & 79.6 $\pm$ 0.8 & \textbf{47.5 $\pm$ 1.4} & 38.6 $\pm$ 0.7 & \textbf{49.4 $\pm$ 0} & 37.4 $\pm$ 3.5 & 32.0 $\pm$ 0.6 & 38.2 $\pm$ 1.2 & 45.1 $\pm$ 1.2 \\
		cardio & \textbf{93.0 $\pm$ 0.7} & 90.4 $\pm$ 0.6 & \textbf{95.0 $\pm$ 0} & 91.5 $\pm$ 2.9 & 88.1 $\pm$ 0.6 & \textbf{93.0 $\pm$ 0.5} & 92.8 $\pm$ 1.3 & 57.0 $\pm$ 2.8 & 48.9 $\pm$ 1.0 & \textbf{67.6 $\pm$ 0} & 53.3 $\pm$ 10.6 & 53.9 $\pm$ 1.5 & 58.7 $\pm$ 1.9 & \textbf{58.9 $\pm$ 3.7} \\
		backdoor & 72.7 $\pm$ 2.9 & 89.2 $\pm$ 0.9 & 84.9 $\pm$ 0 & 86.8 $\pm$ 1.6 & 91.0 $\pm$ 2.1 & \textbf{92.0 $\pm$ 0.5} & \textbf{92.7 $\pm$ 0.5} & 4.5 $\pm$ 0.7 & \textbf{27.3 $\pm$ 2.6} & 9.6 $\pm$ 0 & 12.5 $\pm$ 1.7 & 3.5 $\pm$ 0.8 & 39.4 $\pm$ 3.3 & \textbf{51.7 $\pm$ 8.7} \\
		KDDCup99 & \textbf{97.0 $\pm$ 0.6} & 96.4 $\pm$ 0.2 & 91.1 $\pm$ 0 & 95.9 $\pm$ 0.6 & 41.0 $\pm$ 3.1 & 88.5 $\pm$ 0.7 & \textbf{97.4 $\pm$ 0.1} & \textbf{48.6 $\pm$ 7.0} & 32.6 $\pm$ 1.1 & \textbf{48.5 $\pm$ 0} & 44.1 $\pm$ 1.9 & 15.4 $\pm$ 0.9 & 16.7 $\pm$ 0.5 & 43.0 $\pm$ 0.1 \\
		Celeba & 69.4 $\pm$ 2.4 & 72.5 $\pm$ 0.7 & 72.3 $\pm$ 0 & \textbf{84.3 $\pm$ 2.2} & \textbf{86.0 $\pm$ 0.6} & 67.9 $\pm$ 1.7 & 79.2 $\pm$ 1.9 & 6.3 $\pm$ 0.9 & 6.8 $\pm$ 0.3 & 8.5 $\pm$ 0 & \textbf{10.7 $\pm$ 1.8} & \textbf{10.4 $\pm$ 0.6} & 5.5 $\pm$ 0.6 & 8.1 $\pm$ 1.0 \\
		mnist & 80.2 $\pm$ 1.8 & \textbf{85.3 $\pm$ 0.6} & 83.8 $\pm$ 0 & 67.6 $\pm$ 10.6 & 85.1 $\pm$ 1.6 & 83.7 $\pm$ 1.3 & \textbf{85.5 $\pm$ 1.0} & 27.7 $\pm$ 3.2 & \textbf{38.3 $\pm$ 1.0} & 30.5 $\pm$ 0 & 20.4 $\pm$ 10.2 & 36.7 $\pm$ 2.4 & 33.0 $\pm$ 2.1 & \textbf{40.7 $\pm$ 1.9} \\
		Census & 60.1 $\pm$ 1.8 & 62.6 $\pm$ 0.4 & \textbf{66.8 $\pm$ 0} & 62.7 $\pm$ 1.5 & 65.3 $\pm$ 0.4 & 59.4 $\pm$ 1.1 & \textbf{67.8 $\pm$ 1.0} & 7.1 $\pm$ 0.3 & 7.5 $\pm$ 0.1 & \textbf{8.6 $\pm$ 0} & 7.7 $\pm$ 0.2 & \textbf{8.6 $\pm$ 0.1} & 6.9 $\pm$ 0.2 & \textbf{8.8 $\pm$ 0.2} \\
		Donors & 76.6 $\pm$ 1.0 & 74.5 $\pm$ 0.7 & 74.0 $\pm$ 0 & \textbf{83.2 $\pm$ 1.7} & \textbf{96.2 $\pm$ 1.1} & 67.8 $\pm$ 1.2 & 77.1 $\pm$ 3.2 & 11.9 $\pm$ 0.8 & 10.5 $\pm$ 0.6 & 13.6 $\pm$ 0 & \textbf{15.5 $\pm$ 1.3} & \textbf{43.2 $\pm$ 6.1} & 8.0 $\pm$ 0.3 & 11.3 $\pm$ 1.6 \\
		Cover & 88.0 $\pm$ 2.1 & \textbf{93.6 $\pm$ 0.5} & \textbf{93.3 $\pm$ 0} & 86.6 $\pm$ 5.7 & 51.2 $\pm$ 1.3 & 76.4 $\pm$ 4.0 & 90.9 $\pm$ 0.8 & 6.4 $\pm$ 0.9 & \textbf{9.0 $\pm$ 0.8} & \textbf{11.6 $\pm$ 0} & 5.3 $\pm$ 2.0 & 2.0 $\pm$ 1.1 & 4.0 $\pm$ 0.8 & 6.5 $\pm$ 0.7 \\
		http & \textbf{99.9 $\pm$ 0} & 93.3 $\pm$ 0 & 97.9 $\pm$ 0 & \textbf{99.4 $\pm$ 0.1} & 99.3 $\pm$ 0.1 & 99.3 $\pm$ 0.1 & \textbf{99.4 $\pm$ 0.1} & \textbf{90.2 $\pm$ 7.9} & 34.2 $\pm$ 0.6 & 14.5 $\pm$ 0 & \textbf{39.5 $\pm$ 2.4} & 36.2 $\pm$ 0.8 & 35.1 $\pm$ 0.4 & 35.4 $\pm$ 1.5 \\
		smtp & 90.5 $\pm$ 0.8 & 86.9 $\pm$ 0.9 & 88.0 $\pm$ 0 & \textbf{90.9 $\pm$ 1.3} & 69.8 $\pm$ 1.1 & 84.6 $\pm$ 0.5 & \textbf{92.4 $\pm$ 0.6} & 0.4 $\pm$ 0 & \textbf{56.9 $\pm$ 3.1} & 50.7 $\pm$ 0 & 26.9 $\pm$ 10.8 & 21.7 $\pm$ 3.9 & \textbf{55.2 $\pm$ 7.0} & 36.9 $\pm$ 13.7 \\
		Ionosphere & 84.5 $\pm$ 0.5 & 91.2 $\pm$ 0.2 & 76.8 $\pm$ 0 & 86.5 $\pm$ 2.3 & 82.7 $\pm$ 0.6 & \textbf{94.3 $\pm$ 0.5} & \textbf{93.4 $\pm$ 0.3} & 80.8 $\pm$ 0.5 & 89.6 $\pm$ 0.4 & 66.3 $\pm$ 0 & 81.0 $\pm$ 3.6 & 80.7 $\pm$ 0.6 & \textbf{93.0 $\pm$ 0.6} & \textbf{92.3 $\pm$ 0.5} \\
		Satellite & 70.3 $\pm$ 1.8 & \textbf{76.7 $\pm$ 0.5} & 74.6 $\pm$ 0 & 74.0 $\pm$ 3.0 & 60.9 $\pm$ 0.6 & 69.2 $\pm$ 1.3 & \textbf{78.6 $\pm$ 0.7} & 65.0 $\pm$ 2.1 & 63.5 $\pm$ 0.6 & 66.1 $\pm$ 0 & \textbf{70.3 $\pm$ 3.9} & 61.3 $\pm$ 0.7 & 47.5 $\pm$ 1.3 & \textbf{71.5 $\pm$ 0.9} \\
		Shuttle & \textbf{99.7 $\pm$ 0.1} & 97.2 $\pm$ 0.8 & \textbf{99.7 $\pm$ 0} & 99.4 $\pm$ 0.1 & 99.6 $\pm$ 0.9 & 96.3 $\pm$ 1.2 & 98.0 $\pm$ 0.4 & \textbf{97.6 $\pm$ 0.5} & 40.1 $\pm$ 2.7 & \textbf{95.0 $\pm$ 0} & 91.9 $\pm$ 0.6 & 89.7 $\pm$ 1.1 & 56.5 $\pm$ 6.5 & 64.0 $\pm$ 4.3 \\
		Spam & 61.7 $\pm$ 2.9 & 70.7 $\pm$ 0.2 & 65.6 $\pm$ 0 & \textbf{73.4 $\pm$ 0.3} & \textbf{74.7 $\pm$ 0.5} & 65.0 $\pm$ 0.9 & 71.1 $\pm$ 0.2 & 46.8 $\pm$ 3.0 & 59.1 $\pm$ 0.5 & 51.8 $\pm$ 0 & \textbf{62.7 $\pm$ 0.5} & \textbf{63.0 $\pm$ 0.2} & 59.6 $\pm$ 0.5 & 58.1 $\pm$ 0.5 \\
		Vowel & 75.3 $\pm$ 1.2 & \textbf{90.8 $\pm$ 0.7} & 40.8 $\pm$ 0 & 77.9 $\pm$ 5.5 & 67.2 $\pm$ 1.2 & \textbf{94.4 $\pm$ 1.4} & 90.0 $\pm$ 1.2 & 12.6 $\pm$ 1.7 & 29.5 $\pm$ 2.1 & 2.8 $\pm$ 0 & 14.5 $\pm$ 6.5 & 7.6 $\pm$ 3.2 & \textbf{41.1 $\pm$ 9.0} & \textbf{32.4 $\pm$ 4.3} \\
		Waveform & 69.6 $\pm$ 2.3 & 70.7 $\pm$ 0.8 & \textbf{71.5 $\pm$ 0} & 63.2 $\pm$ 9.5 & 66.9 $\pm$ 0.9 & 63.2 $\pm$ 3.4 & \textbf{74.4 $\pm$ 1.3} & 9.7 $\pm$ 0.9 & 9.8 $\pm$ 0.4 & 9.4 $\pm$ 0 & 7.6 $\pm$ 2.4 & 8.7 $\pm$ 1.5 & \textbf{11.2 $\pm$ 3.3} & \textbf{11.3 $\pm$ 1.0} \\
		Wine & 63.9 $\pm$ 0.7 & \textbf{67.0 $\pm$ 0.3} & 62.5 $\pm$ 0 & 64.6 $\pm$ 2.2 & 61.7 $\pm$ 0.4 & 65.9 $\pm$ 1.2 & \textbf{66.4 $\pm$ 0.5} & 7.8 $\pm$ 0.4 & \textbf{8.9 $\pm$ 0.2} & 8.1 $\pm$ 0 & 8.1 $\pm$ 0.7 & 8.1 $\pm$ 0.2 & \textbf{8.5 $\pm$ 0.5} & 8.1 $\pm$ 0.2 \\
		\midrule
		Average & 74.3 $\pm$ 1.4 & 81.9 $\pm$ 0.5 & 78.8 $\pm$ 0 & 79.6 $\pm$ 3.1 & 81.2 $\pm$ 0.8 & 79.2 $\pm$ 1.2 & \textbf{83.9 $\pm$ 0.9} & 34.8 $\pm$ 2.5 & 34.1 $\pm$ 1.0 & 34.8 $\pm$ 0 & 33.3 $\pm$ 3.6 & 37.4 $\pm$ 1.4 & 34.9 $\pm$ 2.2 & \textbf{38.0 $\pm$ 3.7} \\
		\bottomrule
	\end{tabular}
 }
\end{table*}

Once a forest of optimal isolation trees are built as an optimal isolation forest for OptIForest, it can perform anomaly detection with the same way to derive anomaly scores as LSHiForest. Although a practical isolation tree can consists of two parts with the upper layers resulting from clustering and the lower layers from LSHiForest, the hash function interface makes the two implementations (LSH functions and the non-parametric learning to hash function in Eq.(7)) no difference as to anomaly score computation.




\section{Experiments}

\subsection{Experiment Setting}
\noindent \textbf{Baselines.} Our method OptIForest is compared with six state-of-the-art anomaly detection methods: iForest, LSHiForest, ECOD, REPEN, RDP, and DiForest. We evaluate all methods on 20 widely-used benchmark datasets~\cite{pang2019deep,han2022adbench,li2022ecod}. We refer the reader to Appendix \ref{appendix:baseline} for more details of the baselines and datasets.

\noindent \textbf{Metrics.} We conventionally use the Area Under Receiver Operating Characteristic Curve (AUC-ROC) and Area Under the Precision-Recall Curve (AUC-PR) as the performance metrics \cite{kurt2020real,wang2021unsupervised}. To ensure a fair comparison, we follow the optimal parameter settings of the baseline methods. Please see Appendix \ref{appendix:setup} for more details about the metrics and the parameter settings. All experiments are run 15 times and averaged results are reported.

\subsection{Comparison Study Results and Discussion}

\noindent \textbf{AUC-ROC Results.}~~Table~\ref{tab:auc} illustrates that OptIForest is the most robust method that achieves the best performance on most datasets. Specifically, OptIForest has the highest average AUC-ROC score among all the compared methods, surpassing the second best method by 2\%. Additionally, RDP and LSHiForest perform well and are stable on most datasets. ECOD produces exceptional results on certain datasets, but performs poorly on others like ``Vowel'' and ``CD'' due to its requirement on the data distribution. Note that ECOD has zero standard deviation because it is a deterministic method. The rest methods iForest, REPEN, and DiForest only perform well on a few datasets, indicating their lack of robustness across all datasets. Thus, it can be seen that no learning in iForest or DiForest fail to achieve a robust performance due to the lack of knowledge learned from data. LSHiForest and OptIForest with better tree structures can achieve bias reduction, and learning from data further makes OptIForest perform better and even outperform the deep detector.

\noindent \textbf{AUC-PR Results.}~~The results in Table~\ref{tab:auc} show that OptIForest consistently performs best across various datasets with the highest average AUC-PR score, showing the superiority of our method. Specifically, our method outperforms the second best one by about 0.6\%. The top two results of each dataset are highlighted in bold. Although iForest and ECOD have exceptional results on a few datasets, they perform poorly on others. REPEN has the lowest average AUC-PR score and performs poorly on most datasets. The AUC-PR results show the same trend as the AUC-ROC results, supporting that our OptIForest performs the best performance in a robust way.

\noindent \textbf{Execution Time.}~~We use execution time as the efficiency metric to compare OptIForest with other methods, and the results are reported in Table $3$ in Appendix \ref{execution time}. OptIForest has much shorter execution time than the deep learning methods of REPEN and RDP for most datasets. As expected, OptIForest reasonably has longer execution time than other isolation forests where no learning is conducted. So, it can be seen that our approach strikes a good balance between execution efficiency and detection performance. It is appealing that OptIForest can achieve better performance than deep learning methods but takes much less execution cost.

\subsection{Ablation Study Results and Discussion}
\label{sect:Ablation}
To understand how the branching factor, the cut threshold, and the sampling size influence the performance of OptIForest, we performed detailed ablation studies on eight datasets with a range of data types and sizes.


\paragraph{Branching Factor.}~~To eliminate the effects of feature learning, we use data-independent baseline (i.e., $\epsilon=\psi$) to analyze the impact of the branching factor. Besides, it is difficult to implement the average branch of $e$ in practical experiment because many of the underlying branches just produce 2 branch forks. However, we can still analyze the AUC-ROC results for branches that are either near or far from $e$. The AUC-ROC results for different branching factors are presented in Fig.~\ref{figbranch}. When the branching factor is close to $e$, the AUC-ROC results are the best on almost all datasets. This result means that the best anomaly detection accuracy can be achieved if the branching factor satisfies the condition: $v \approx e$. These experimental results validate Theorem~\ref{theorem} in Section~\ref{optimaltree}.
\begin{figure}[!tbp]
  \centering  
  \includegraphics[width=0.4\textwidth]{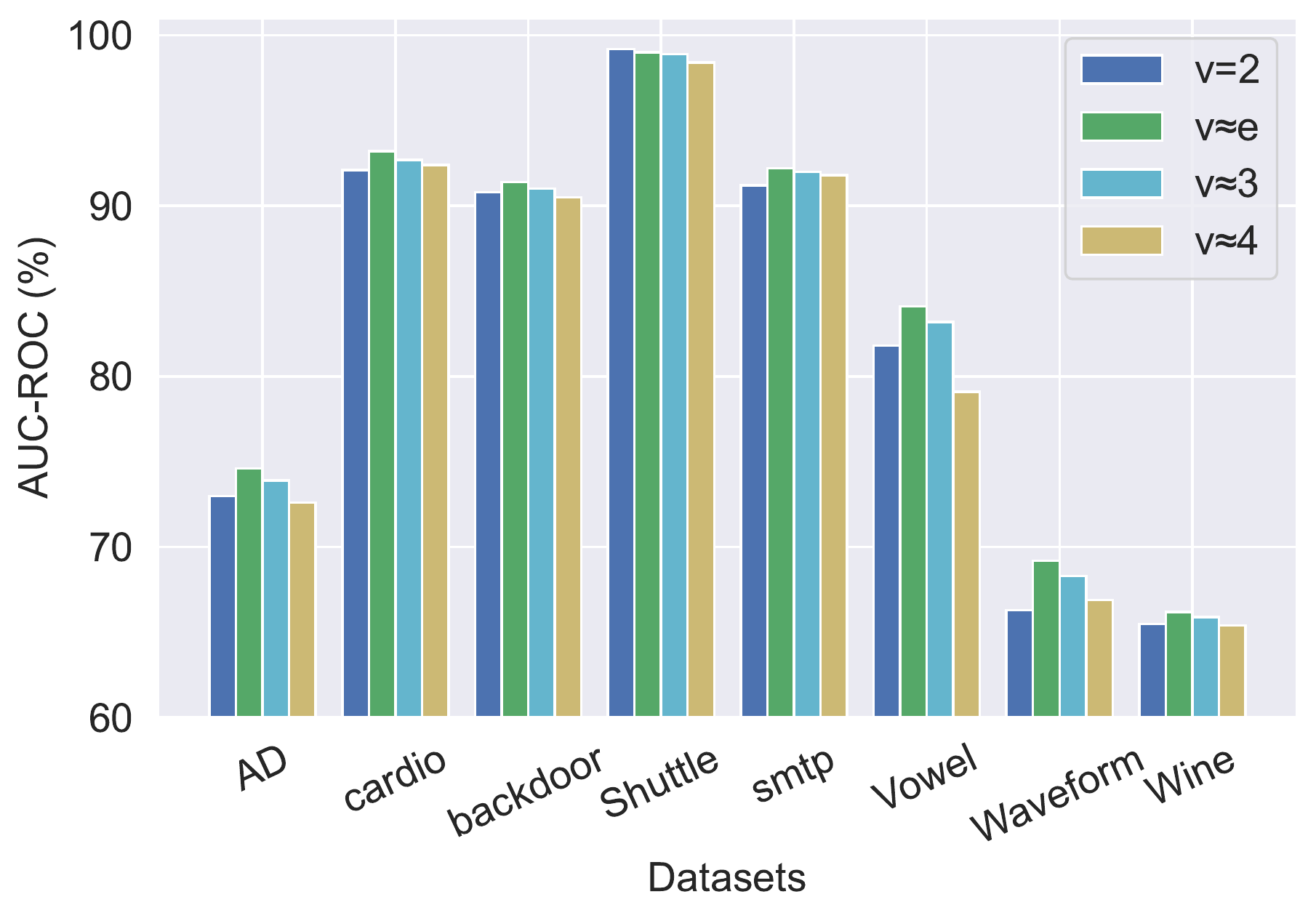}
  \captionsetup{font={small}}
  \caption{Detection performance changes w.r.t. branching factor $v$. A branching factor closer to $e$ leads to a better performance.}
  \label{figbranch}
\end{figure}

\paragraph{Cut Threshold $\epsilon$.}~~The cut threshold is studied by raising the branching factor to the power of $e$, as small changes of the threshold do not greatly affect detection accuracy. We also study the boundary condition of the cut threshold ($\epsilon=512$). Fig.~\ref{figthreshold} displays AUC-ROC results and standard deviations for different $\epsilon$. It can be seen that the curve increases as the threshold increases in four large datasets with data sizes larger than 10,000 or dimension sizes larger than 1,000, indicating little learning is required for large datasets (with $\epsilon=e^6$ as a reference). Conversely, more learning is necessary in four small datasets, and $\epsilon=e^4$ serves as a good reference point to balance accuracy and time efficiency. A comparison between the results of $\epsilon=512$ with others illustrates that appropriate learning results in better outcomes than not learning on most datasets. But it is exceptional in the ``AD'' and ``vowel'' datasets, where not learning yields better results. This could be attributed to the fact that the isolation forest without learning exactly has the average branch of $e$, which achieves a favorable bias-variance trade-off.
\begin{figure}[!t]
  \centering  
  \includegraphics[width=0.4\textwidth]{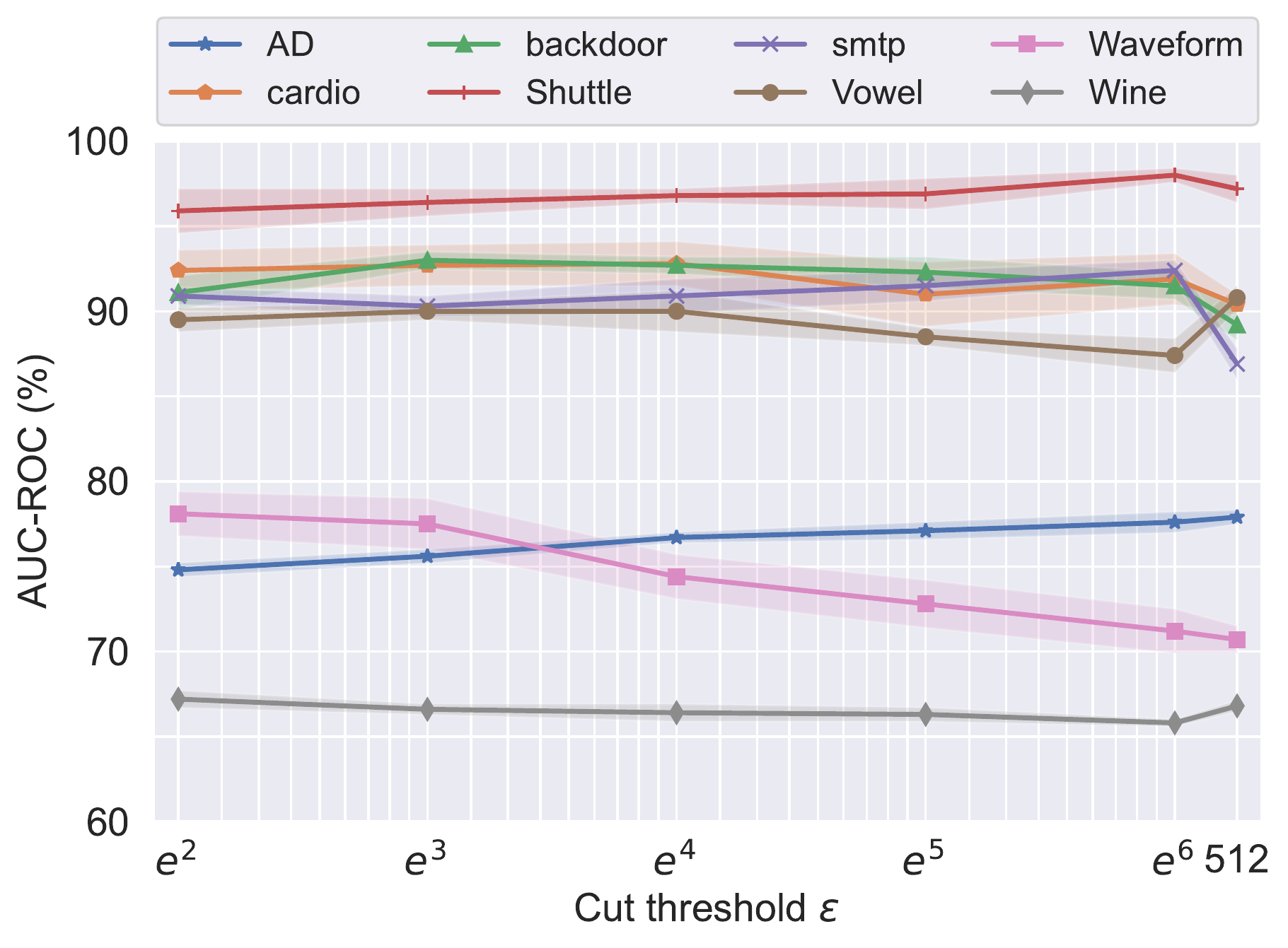}
  \captionsetup{font={small}}
  \caption{Detection performance changes w.r.t. cut threshold $\epsilon$. In general, a certain level of learning outperforms the case of no learning ($\epsilon=512$ means no learning).}
  \label{figthreshold}
\end{figure}

\paragraph{Sampling Size.}~~To facilitate the observation of the results, the sampling size is set from $2^6$ to $2^{11}$ with exponential increase, as the results in this range can display significant change. The cut threshold of the same sampling size will impact the AUC-ROC results, so we determine the best results for each sampling group as the final outcomes. It can be concluded from Fig.~\ref{figsample} that the AUC-ROC results improve with the increase of the sampling size on most datasets. The AUC-ROC results remain stable once the sampling size exceed $2^9$, except for the ``cardio'' and ``Shuttle'' datasets, which keep stable on any sampling size. 

\begin{figure}[!t]
  \centering  
  \includegraphics[width=0.4\textwidth]{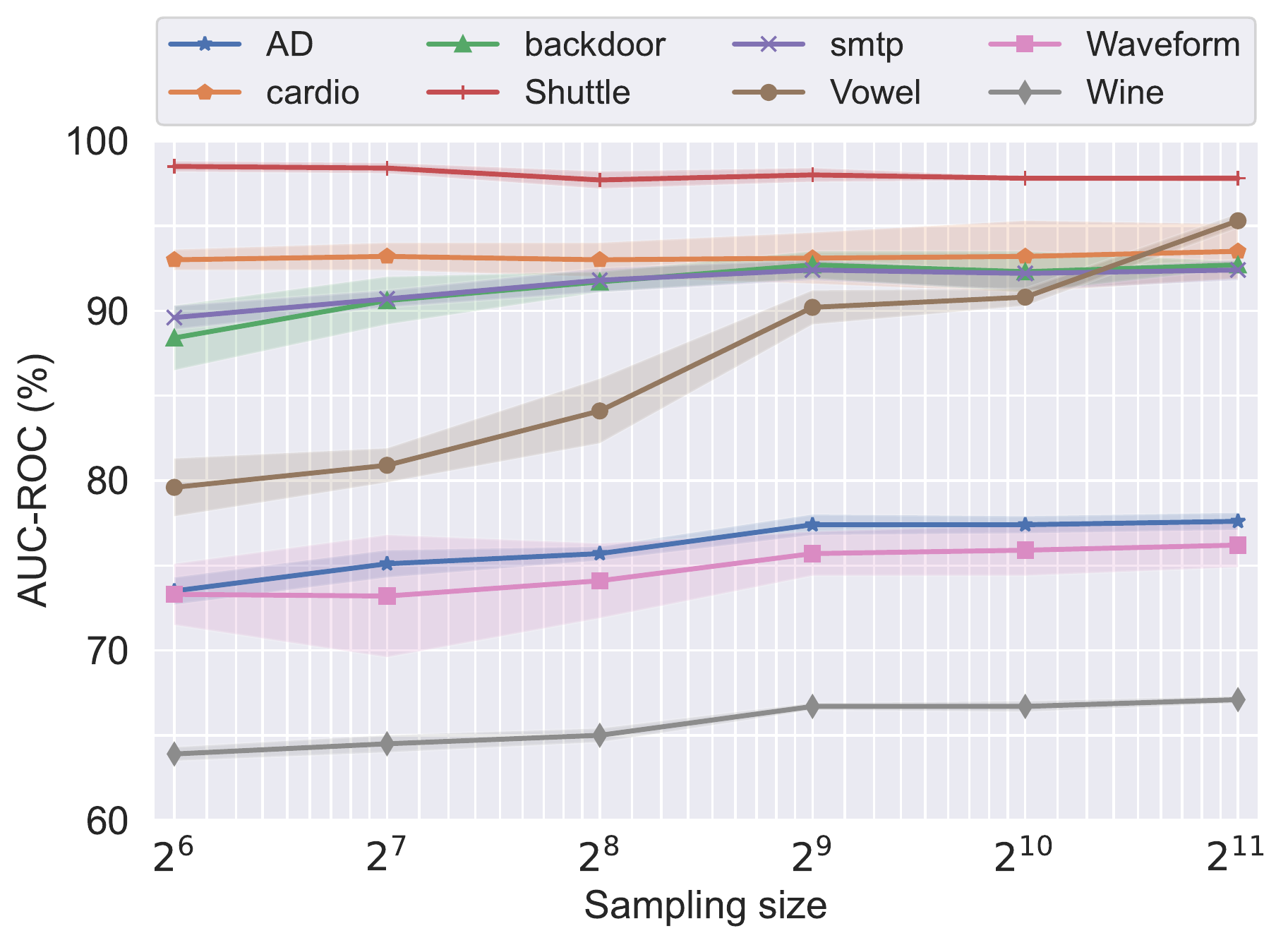}
  \captionsetup{font={small}}
  \caption{Detection performance changes w.r.t. sampling size $\psi$. In general, the performance is saturated when $\psi$ reaches a threshold ($2^9=512$ herein).}
  \label{figsample}
\end{figure}

\section{Conclusion and Future Work}
Isolation Forest is an appealing anomaly detection method given its salient characteristics, but it lacks a theoretical foundation about the structure optimality of an isolation tree. In this paper, we have introduced the concept of isolation efficiency and formulated a constrained optimisation problem to derive the optimal branching factor for an isolation tree. We have shown that an optimal isolation tree or forest is theoretically with the branching factor $e$. Furthermore, we have developed a practical optimal isolation forest OptIForest which can achieve both high computational efficiency and a good bias-variance trade-off by designing a novel clustering based learning to hash for data isolation. We have conducted extensive experiments on a variety of benchmarking datasets for both ablation and comparative studies, and the results have confirmed the effectiveness and efficiency of OptIForest. In the future, we plan to design a version in the context of federated learning where data are scattered across multiple clients.

\clearpage




\bibliographystyle{named}
\bibliography{ijcai22}

\clearpage

\appendix
\setcounter{theorem}{0}
\setcounter{equation}{0}
\renewcommand{\theequation}{A.\arabic{equation}}
\setcounter{figure}{0}
\renewcommand{\thefigure}{A.\arabic{figure}}

\counterwithin{proposition}{section}
 
\section{Appendix A}
\label{proof}

\subsection{Proof of Theorem 1}
\label{proof1}

\begin{theorem}
\label{theorem} 
An isolation tree $T$ has the highest isolation efficiency when its branching factor $v=e$, where $e$ is the Euler's number with numerical values around $2.718$.
\end{theorem}

\begin{proof}
According to the Definition of the isolation efficiency, to obtain the optimal isolation tree is to maximise the isolation efficiency, which is formulated as:
\begin{equation}
    \eta(v,d)=\frac{\psi}{\phi}=\frac{v^d}{vd},
\label{formul1}
\end{equation}
where $v$ represents the branching factor, $d$ represents the tree depth. After fixing the isolation area by a constraint number $\Phi$, we can obtain the function of isolation efficiency with respect to $v$:
\begin{equation}
    \eta(v)=\frac{1}{\Phi}v^{\frac{\Phi}{v}}.
\label{formul2}
\end{equation}
The derivative of $\eta(v)$ can be derived by:
\begin{equation}
\eta'(v)=v^{(\frac{\Phi}{v}-2)}(1-\ln v).
\label{formul3}
\end{equation}
Because $v^{(\frac{\Phi}{v}-2)} > 0$, if $1-\ln v > 0$, we can have $\eta'(v) > 0$, and if $1-\ln v < 0$, we can have $\eta'(v) < 0$. Thus, $\eta(v)$ is a convex function and has a maximum when $1-\ln v = 0$, i.e., the optimal branching faction $v^*=e$.

Also, we can visualize the relationship between $v$ and $\eta(v)$. Without loss of generality, Fig.~\ref{fig:function} demonstrates an example where $\Phi$ is fixed at 6 (other values should show the same trend of $\eta(v)$ with respect to $v$). It can be seen that the highest isolation efficiency is attained when the branch factor is equal to $e$.
\begin{figure}[htbp]
  \centering  
  \includegraphics[width=0.45\textwidth]{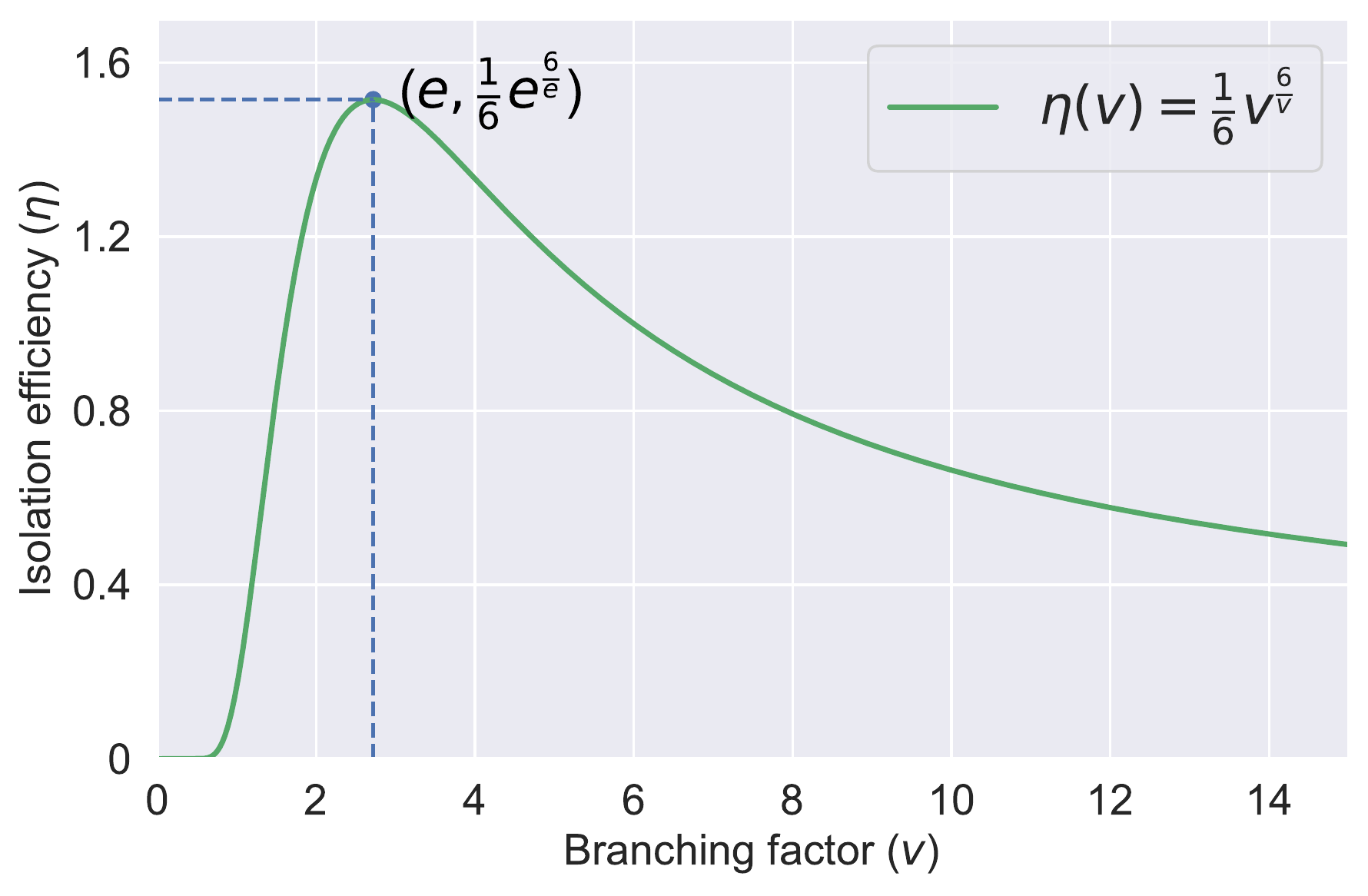}
  \captionsetup{font={small}}
  \caption{The relationship between isolation efficiency $\eta(v)$ and branching factor $v$. We use $\Phi=6$ as an example for an illustration without loss of generality.}
  \label{fig:function}
\end{figure}

\end{proof}

\subsection{Proof of Theorem 2}
\label{proof2}

\begin{theorem}
\label{theorem:uppperbound}
To satisfy the condition in Eq.(\ref{expectation:condition}), the probability of having a branching factor $V{\geq} v$, $v{\geq}2$, should have the following upper bound:
\begin{equation}
    \textrm{Pr}(V{\geq}v) ~{\leq}~ \frac{e-2}{v-2}.
\end{equation}
\end{theorem}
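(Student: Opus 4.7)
The plan is to treat this as a Markov-type inequality on the shifted distribution, exploiting the fact that every branching factor in the sample space satisfies $V \geq 2$. Let $q := \textrm{Pr}(V \geq v) = \sum_{i=v}^{+\infty} p_i$ denote the quantity to bound. I would first split the expectation at the threshold $v$,
\begin{equation*}
e = \sum_{i=2}^{+\infty} i \, p_i = \sum_{i=2}^{v-1} i \, p_i + \sum_{i=v}^{+\infty} i \, p_i,
\end{equation*}
so that $q$ appears only in the second sum. The hint in the theorem statement suggests the same extremal configuration: putting probability only at $V=2$ and $V=v$ (any mass at intermediate values $i \in \{3, \dots, v-1\}$ is ``wasted'' because it inflates the expectation without increasing $q$, while any mass above $v$ inflates the expectation even more per unit probability).

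Next I would lower-bound each piece by its smallest possible coefficient: $\sum_{i=2}^{v-1} i\, p_i \geq 2 \sum_{i=2}^{v-1} p_i = 2(1-q)$, using $i \geq 2$ and $\sum_i p_i = 1$; and $\sum_{i=v}^{+\infty} i\, p_i \geq v \sum_{i=v}^{+\infty} p_i = v q$, which is precisely the inequality flagged in the hint. Chaining these into the expectation constraint gives
\begin{equation*}
e \;\geq\; 2(1-q) + v q \;=\; 2 + (v-2)\, q.
\end{equation*}
Rearranging for $q$ (valid since $v > 2$ makes $v-2 > 0$) yields $q \leq \frac{e-2}{v-2}$, which is exactly the claimed upper bound.

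There is essentially no hard step here: the whole argument is a two-line application of a Markov-style bound adapted to the lower truncation of the support at $2$. The only subtlety worth flagging in the write-up is that the bound is tight and attained by the two-point distribution $p_2 = 1 - \frac{e-2}{v-2}$, $p_v = \frac{e-2}{v-2}$, with all other $p_i = 0$; this confirms the hint's choice of extremal configuration and also explains why the inequality must degenerate as $v \to 2^+$ (so the stated range should be understood as $v > 2$, matching the main-text version).
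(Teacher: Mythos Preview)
Your proof is correct and essentially matches the paper's approach: both pivot on the tail inequality $\sum_{i \geq v} i\,p_i \geq v \sum_{i \geq v} p_i$ together with $\sum_i p_i = 1$ to arrive at $e - 2 \geq (v-2)\,q$. The only cosmetic difference is that the paper first imposes $p_i = 0$ for $2 < i < v$ (the extremal configuration) before applying the bound, whereas you handle the sub-$v$ part directly via $\sum_{i=2}^{v-1} i\,p_i \geq 2(1-q)$; your version is slightly tidier since it applies to arbitrary admissible distributions without needing to argue separately that the two-point configuration is extremal.
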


\begin{proof}
When the average branching factor is equal to $e$, we can formalise the branching factor and the corresponding probability by:
\begin{equation}
\left\{
             \begin{array}{lr}
             \sum_{i=2}^{+\infty} i \cdot p_i&=e,  \\
             \sum_{i=2}^{+\infty} p_i&=1,  \\
             \end{array}
\right. 
\label{eqproof1}
\end{equation}
where $p_i$ corresponds to the probability to produce $i$ branches and the sum of the probabilities of producing different branches is equal to 1. To calculate the upper bond of $\textrm{Pr}(V{\geq}v)$, we should make $p_i=0$ for $2<i<v$, and then leverage the inequality:
\begin{equation}
\sum_{i=v}^{+\infty}ip_i ~{\geq}~ v\sum_{i=v}^{+\infty}p_i.
\label{eqproof2}
\end{equation}
Combining the condition that $p_i=0$ for $2<i<v$, we can rewrite Eq.(\ref{eqproof1}) as:
\begin{equation}
\left\{
             \begin{array}{lr}
            2p_2+ \sum_{i=v}^{+\infty} i \cdot p_i&=e,  \\
            p_2+ \sum_{i=v}^{+\infty} p_i&=1.  \\
             \end{array}
\right. 
\label{eqproof3}
\end{equation}
Then, we can have the following equation:
\begin{equation}
e-2=\sum_{i=v}^{+\infty} i p_i-2\sum_{i=v}^{+\infty} p_i, v > 2.
\label{eqproof4}
\end{equation}
By substituting Eq.(\ref{eqproof2}) into Eq.(\ref{eqproof4}), we can obtain:
\begin{equation}
e-2~{\geq} ~ \sum_{i=v}^{+\infty}ip_i-2\sum_{i=v}^{+\infty} p_i=(v-2)\sum_{i=v}^{+\infty} p_i, v > 2.
\label{eqproof5}
\end{equation}
Finally, we can have the following inequality relationship:
\begin{equation}
    \sum_{i=v}^{+\infty} p_i ~{\leq}~\frac{e-2}{v-2}, v > 2.
\end{equation}
Thus, the upper bound of $\textrm{Pr}(V{\geq}v)$ is $\frac{e-2}{v-2}$, $v>2$.

\end{proof}

\subsection{An Instantiation of the Probability Distribution of $V$}
\label{proof3}

Let a random variable $V$ denote the branching factor for an isolation tree $T$ with the sample space $\{v~|~v\in\mathbb{Z}~ \& ~v~{\geq}~2\}$. Let $\mathcal{D}$ be a distribution with the probability of taking value $v$ being: $\textrm{Pr}(V=v)=p_v$. We can generate the branching factors from the distribution $\mathcal{D}$. Then, we can have the following theorem with a proof.
\begin{proposition}
    If the distribution $\mathcal{D}$ is instantiated by assigning the probability of taking the branching factor $v$ as $p_v=\frac{(e-1)^2}{2e-1}e^{2-v}$, $v~{\geq}~2$, the resultant distribution satisfies the condition: $\mathbb{E}(V)=\sum_{v=2}^{+\infty}v{\cdot}p_v=e$.
\end{proposition}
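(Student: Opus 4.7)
The plan is to reduce the series defining $\mathbb{E}(V)$ to a standard arithmetico-geometric series in $x = 1/e$ and then verify the claimed equality by a direct closed-form evaluation. First I would pull the $v$-independent prefactor out of the sum and rewrite
\[
\mathbb{E}(V) \;=\; \sum_{v=2}^{\infty} v\, p_v \;=\; \frac{(e-1)^{2}\, e^{2}}{2e-1}\sum_{v=2}^{\infty} v\, e^{-v},
\]
so that the whole proposition reduces to evaluating the tail sum $S := \sum_{v=2}^{\infty} v\, e^{-v}$ in closed form.

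For $S$, I would invoke the standard identity $\sum_{v=1}^{\infty} v\, x^{v} = x/(1-x)^{2}$, valid for $|x| < 1$, at $x = 1/e$. This gives $\sum_{v=1}^{\infty} v\, e^{-v} = e/(e-1)^{2}$. Subtracting the $v = 1$ term $1/e$ and combining over the common denominator $e(e-1)^{2}$ produces the numerator $e^{2} - (e-1)^{2} = 2e - 1$, so $S = (2e-1)/[\,e(e-1)^{2}\,]$.

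Substituting this back, the factor $2e-1$ in the numerator of $S$ cancels against the same factor in the denominator of the prefactor, and $(e-1)^{2} e^{2}/[\,e(e-1)^{2}\,] = e$, delivering $\mathbb{E}(V) = e$ as claimed. There is no genuine obstacle — the whole argument is a one-shot application of the arithmetico-geometric formula — so I would simply execute the above manipulation inline. The only step worth flagging is the elementary identity $e^{2} - (e-1)^{2} = 2e-1$, which is precisely the coincidence that makes the normalising constant $(e-1)^{2}/(2e-1)$ in the definition of $p_{v}$ produce a clean telescoping cancellation; the constants have evidently been engineered so that this evaluation comes out exactly to $e$.
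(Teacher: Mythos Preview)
Your proposal is correct and follows essentially the same approach as the paper: factor out the constant prefactor, evaluate the arithmetico-geometric tail $\sum_{v\ge 2} v\,e^{-v}$ (the paper does this via the shift-and-subtract trick rather than quoting the closed form $\sum_{v\ge 1} v x^{v}=x/(1-x)^{2}$ and subtracting the $v=1$ term), and then cancel the $2e-1$ and $(e-1)^{2}$ factors to obtain $e$. The two computations are algebraically equivalent and differ only in presentation.
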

\begin{proof}
\begin{equation}
\begin{aligned}
\mathbb{E}(V)&=\sum_{v=2}^{+\infty} v \cdot P_v  \\
    &=\sum_{v=2}^{+\infty} v \cdot \frac{e(e-1)^2}{2e-1} \cdot e^{1-v} \\
    &=\frac{e(e-1)^2}{2e-1} \sum_{v=2}^{+\infty} v \cdot e^{1-v}.
\label{eq21}
\end{aligned}
\end{equation}

{\noindent}For convenience, let $Z$ denote the quantity $\sum_{v=2}^{+\infty} v{\cdot}e^{1-v}$, i.e.,

\begin{equation}
Z \triangleq \sum_{v=2}^{+\infty} v \cdot e^{1-v}. 
\label{eq22}
\end{equation}
Multiplying both sides of the Eq.(\ref{eq22}) by $e^{-1}$ can result in:
\begin{equation}
e^{-1} \cdot Z = \sum_{v=2}^{+\infty} v \cdot e^{-v}. 
\label{eq23}
\end{equation}
Subtracting Eq.(\ref{eq23}) from Eq.(\ref{eq22}), we can derive the following result by using the formula of computing the sum of a geometric series with a common ratio $e^{-1}<1$.
\begin{equation}
\begin{aligned}
Z-e^{-1}Z&=2e^{-1}+e^{-2}+e^{-3}+...+e^{-v}+... \\
&=e^{-1}+(e^{-1}+e^{-2}+e^{-3}+...+e^{-v}+...) \\
&=e^{-1} + \frac{e^{-1}}{1-e^{-1}} \\
&=e^{-1} + \frac{1}{e-1}.
\label{eq24}
\end{aligned}
\end{equation}
According to Eq.(\ref{eq24}), the value of $Z$ can be calculated by:
\begin{equation}
Z = \frac{e^{-1} + \frac{1}{e-1}}{1-e^{-1}} = \frac{1+\frac{e}{e-1}}{e-1} = \frac{2e-1}{(e-1)^2}.
\label{eq25}
\end{equation}
Finally, the expectation $\mathbb{E}(V)$ can be calculated by Eq.(\ref{eq21}), (\ref{eq22}) and (\ref{eq25}):
\begin{equation}
\mathbb{E}(V) = \frac{e(e-1)^2}{2e-1} \cdot Z = \frac{e(e-1)^2}{2e-1} \cdot \frac{2e-1}{(e-1)^2} = e.
\label{eq26}
\end{equation}
\end{proof}

\section{Appendix B}
\label{appendix:baseline}

\subsection{Baselines}
\label{compared methods}
Our method OptIForest is compared with six state-of-the-art anomaly detection methods: iForest, LSHiForest, ECOD, REPEN, RDP, and DiForest, which are conventionally used for empirical study in many prior works. While there are many other anomaly detection methods, these six state-of-the-arts are chosen with the rationale that they are either very fresh or have shown superior performance in most cases. These methods are categorised into shallow anomaly detection methods and deep anomaly detection methods, briefly described as follows:
\begin{itemize}
	\item \textbf{Shallow Anomaly Detection Methods.} iForest \cite{liu2008isolation} is a seminal anomaly detection method, which can isolate data instances very efficiently with an ensemble of binary trees and usually achieves good performance. iForest has been widely recognised in academia and deployed in real applications, e.g., it has been included in \textit{scikit-learn}\footnote{\url{https://scikit-learn.org}}, a commonly-used machine learning library in Python. LSHiForest \cite{zhang2017lshiforest} is a generic framework, which generalise the forest isolation mechanism with the multi-fork tree structure and achieves higher performance and applicability. We select its instance L2SH which in general has the best performance with respect to accuracy and robustness for the comparison study. The source code is available in a public GitHub repository\footnote{\url{https://github.com/xuyun-zhang/LSHiForest}}. ECOD \cite{li2022ecod} is a very fresh anomaly detection method, which is parameter-free and easy to interpret. ECOD uses the empirical distributions of the input data to estimate tail probabilities per dimension for each data point. This method is simple but effective on many benchmark datasets. The source code is available in a public GitHub repository\footnote{\url{https://github.com/yzhao062/pyod}}.
	\item \textbf{Deep Anomaly Detection Methods.} REPEN \cite{pang2018learning} is a random distance-based anomaly detection method, which utilises deep representation learning and distance calculation to learn low-dimensional representations. REPEN has been widely accepted as a benchmark for deep anomaly detection and its source code is available in a public GitHub repository\footnote{\url{https://github.com/Minqi824/ADBench/tree/main/baseline}}. RDP \cite{wang2021unsupervised} trains neural networks to predict the abnormalities of data distances in a randomly projected space, where the genuine class structures are learned and implicitly embedded in the randomly projected space. RDP is a state-of-the-art deep anomaly detection method and achieves good performance on many datasets. The source code can be found in a public GitHub repository\footnote{\url{https://git.io/RDP}}. DiForest \cite{xu2022deep} is a very recent anomaly detection approach that utilises neural networks to learn representations and these representations are then used to construct isolation forests for anomaly detection. The source code is available in a public GitHub repository\footnote{\url{https://github.com/xuhongzuo/deep-iforest}}.
\end{itemize}


\subsection{Datasets}
\label{datasets}
We conduct experiments on 20 real-world datasets from different fields including finance, healthcare, network, etc. All datasets are available in public repositories like UCI Machine Learning Repository\footnote{\url{https://archive.ics.uci.edu/ml/datasets.php}}, Kaggle Repository\footnote{\url{https://www.kaggle.com/datasets}}, and ADRepository\footnote{\url{https://github.com/GuansongPang/ADRepository-Anomaly-detection-datasets}}.
The basic information about the datasets is summarised in Table~\ref{tab:Real-world datasets}.
\begin{table}[!t]
	\centering
        
	\caption{A summary of datasets used in the experiments. Here, ``$n$'' in the table denotes the size of the dataset, ``$m$'' denotes the dimension of the dataset, and ``Rate'' represents the proportion of anomalous data in total data instances.}
	\label{tab:Real-world datasets}
        \resizebox{0.45\textwidth}{!}{%
	\begin{tabular}{l l l l l} 
		\toprule
		Dataset & $n$ & $m$ & Rate(\%) & Category\\ 
		\midrule
		AD & 3,279 & 1,555 & 13.79 & Finance \\
		campaign & 41,188 & 62 & 11.27 & Finance \\
		Arrhythmia & 452 & 274 & 14.60 & Healthcare \\
		cardio & 1,831 & 21 & 9.61 & Healthcare \\
		backdoor & 95,329 & 196 & 2.44 & Network \\
		KDDCup99 & 494,021 & 38 & 1.77 & Network \\
		Celeba & 202,599 & 39 & 2.24 & Image \\
		mnist & 7,603 & 100 & 9.21 & Image \\
		Census & 299,285 & 500 & 6.20 & Sociology \\
		Donors & 619,326 & 10 & 5.92 & Sociology \\
		Cover & 286,048 & 10 & 0.96 & botany \\
		http & 567,498 & 3 & 0.39 & Web \\
		smtp & 95,156 & 3 & 0.03 & Web \\
		Ionosphere & 351 & 34 & 35.90 & Oryctognosy \\
		Satellite & 6,435 & 36 & 31.60 & Astronautics \\
		Shuttle & 49,097 & 9 & 7.15 & Astronautics \\
		Spam & 4,207 & 57 & 39.91 & Document \\
		Vowel & 1,456 & 12 & 3.43 & Linguistics \\
        Waveform & 3,505 & 21 & 4.62 & Physics \\
		Wine & 5,318 & 11 & 4.53 & Chemistry \\
		\bottomrule
	\end{tabular}
 }
\end{table}

\section{Appendix C}
\label{appendix:setup}

\subsection{Parameter Settings}
\label{setting}
Our method uses 100 isolation trees as the base detector for the isolation forest, the same as the existing isolation forest based methods. In section \ref{sect:Ablation}, we study the relationship between the sampling size and the detection performance and observe the performance of our method is saturated when the sampling size reaches a threshold ($2^9 = 512$ herein). Thus, the size of the sample used for constructing each tree is 512 in our method. Besides, we study how the cut threshold influence the detection performance in section \ref{sect:Ablation} and observe that the selection of cut threshold $\epsilon$ is influenced by the size of dataset, i.e., the large datasets perform well with a big cut threshold (e.g., $\epsilon=403$), while the small datasets perform well with a small cut threshold (e.g., $\epsilon=55$). To make a fair comparison, our method will use the above optimal settings to get the results. In the compared methods, all the parameters are set as the optimal settings in the original papers.
\subsection{Evaluation Metrics}
\label{metric}
We use the Area Under Receiver Operating Characteristic Curve (AUC-ROC) and Area Under the Precision-Recall Curve (AUC-PR) as the accuracy evaluation criterion \cite{kurt2020real,wang2021unsupervised}. Before explaining two evaluation metrics, we need to introduce four indicators derived from the confusion matrix, including Recall, Precision, True Positive Rate (TPR), and False Positive Rate (FPR). These four indicators can be calculated as:
\begin{equation*}
    \textrm{Recall}=\frac{\textrm{TP}}{\textrm{TP+FN}},
\end{equation*}
\begin{equation*}
    \textrm{Precision}=\frac{\textrm{TP}}{\textrm{TP+FP}},
\end{equation*}
\begin{equation*}
    \textrm{True Positive Rate}=\frac{\textrm{TP}}{\textrm{TP+FN}},
\end{equation*}
\begin{equation*}
    \textrm{False Positive Rate}=\frac{\textrm{FP}}{\textrm{FP+TN}},
\end{equation*}

\noindent where $T$ represents the original object or event is true, $F$ represents the original object or event is false, $P$ represents the object is predicted as a positive example by the classifier, $N$ represents the object is predicted as a negative example by the classifier. Then, $TP$ can be explained as the number of true objects that are predicted to be positive by the classifier. The total number of data instances, denoted as $S$, is formalised as $S=TP+FP+TN+FN$. The ROC curve takes TPR as the Y-axis and FPR as the X-axis, so the value of AUC-ROC is the area under the ROC curve. Similarly, the PR curve summarises the relation between Precision and Recall. The value of AUC ranges from 0 to 1, where a larger AUC result indicates better performance. The result of AUC has been extensively adopted in many anomaly detection works and has become an essential measure of accuracy in correlational research. Furthermore, the execution time is used as the efficiency evaluation criterion. All experiments are run 15 times and averaged results are reported.

\section{Appendix D}
\subsection{Execution Time}
\label{execution time}

We use execution time as the efficiency metric to compare OptIForest with other methods on 20 real world datasets , and the results are reported in Table \ref{tab:time}. It can be seen in Table \ref{tab:time} that our method OptIForest has much shorter execution time than the deep anomaly detetion methods of REPEN and RDP for most datasets. It is worth noting that the more our method learns, the more time it will spend. As previously discussed in the AUC comparison, achieving optimal performance on small datasets necessitates more learning, resulting in longer execution time compared to REPEN on some small datasets. As expected, OptIForest reasonably has longer execution time than other isolation forests where no learning is conducted. So, it can be concluded that our approach strikes a good balance between execution efficiency and detection performance. It is appealing that OptIForest can achieve better performance than deep learning methods but takes much less execution cost.
\begin{table}[!t]
	\centering
	\caption{Comparing execution time (s) of all methods. It is worth noting that OptIForest has much shorter execution time than the deep learning methods of REPEN and RDP for most datasets.}
	\label{tab:time}
        \resizebox{0.5\textwidth}{!}{%
	\begin{tabular}{llllllll} 
		\toprule
		Dataset & iForest & LSHiForest & ECOD & REPEN & RDP & DiForest & \textbf{OptIForest} \\ 
		\midrule
		AD & 10 & 26 & 4 & 23 & 7,327 & 14 & 83 \\
		campaign & 7 & 294 & 3 & 584 & 7,384 & 111 & 637 \\
		Arrhythmia & 0.3 & 6 & 0.2 & 16 & 6,037 & 2 & 58 \\
		cardio & 0.3 & 26 & 0.1 & 9 & 6,508 & 6 & 68 \\
		backdoor & 26 & 758 & 18 & 2,647 & 5,267 & 245 & 1,289 \\
		KDDCup99 & 40 & 2,409 & 18 & 38,561 & 6,991 & 634 & 5,652 \\
		celeba & 22 & 1,055 & 7 & 14,334 & 8,026 & 579 & 2,622 \\
		mnist & 2 & 35 & 1 & 43 & 7,482 & 22 & 132 \\		
		census & 205 & 1,911 & 185 & 30,794 & 8,991 & 883 & 4,773 \\
		donors & 24 & 3,901 & 7 & 51,496 & 7,655 & 1,367 & 6,784 \\
		Cover & 14 & 1,474 & 6 & 27,930 & 6,472 & 732 & 3,637 \\
		http & 19 & 3,557 & 4 & 91,980 & 6,098 & 1,322 & 7,083\\
		smtp & 4 & 501 & 1 & 7,115 & 6,422 & 203 & 1,262 \\
		Ionosphere & 0.3 & 6 & 0.1 & 3 & 3,787 & 2 & 69 \\
		Satellite & 1 & 36 & 0.4 & 213 & 4,649 & 17 & 209 \\
		Shuttle & 2 & 313 & 1 & 1,060 & 4,928 & 124 & 662 \\
		Spam & 1 & 15 & 0.3 & 123 & 6,316 & 10 & 83 \\
		Vowel & 0.3 & 12 & 0.1 & 60 & 3,743 & 5 & 50 \\
		Waveform & 0.5 & 17 & 0.2 & 112 & 3,839 & 10 & 109 \\
		Wine & 0.4 & 29 & 0.1 & 156 & 4,613 & 15 & 126 \\
		\bottomrule
	\end{tabular}
 }
\end{table}

\end{document}